\theoremstyle{plain}
\newtheorem{lemma}{Lemma}
\newtheorem{definition}{Definition}
\newtheorem{property}{Property}
\newtheorem{cor}{Corollary}
\newtheorem{remark}{Remark}
\newtheorem{example}{Example}
\newtheorem{problem}{Problem}
\newtheorem*{problem*}{Problem}
\newtheorem*{theorem*}{Theorem}
\newtheorem{assumption*}{Assumption}
\declaretheorem[name=Theorem]{thm}
\newcommand{\redtext}[1]{{\color{red}#1}}
\newcommand{\myvar}[1]{\bm{#1}}
\newcommand{\tildevar}[1]{\tilde{\bm{#1}}}
\newcommand{\myvardot}[1]{\dot{\myvar{#1}}}
\newcommand{\myset}[1]{\mathcal{#1}}
\newcommand{\mysetbound}[1]{\partial \myset{#1}}
\newcommand{\mysetint}[1]{\mathring{\myset{#1}}}
\title{\LARGE \bf
Safe, Passive Control for Mechanical Systems with Application to Physical Human-Robot Interactions
}
 \author{ Wenceslao Shaw Cortez, Christos K. Verginis, and Dimos V. Dimarogonas 
\thanks{This work was supported by the Swedish Research Council (VR),
 the Swedish Foundation for Strategic Research (SSF), the Knut and Alice
Wallenberg Foundation (KAW) and  the H2020 ERC Consolidator Grant LEAFHOUND.
The authors are with the School of EECS, Royal Institute of Technology (KTH), 100 44 Stockholm, Sweden (Email: 
       {\tt\small wencsc, cverginis, dimos@kth.se}).}
}
\begin{document}

\maketitle
\thispagestyle{plain}
\pagestyle{plain}

\begin{abstract}

In this paper, we propose a novel safe, passive, and robust control law for mechanical systems. The proposed approach addresses safety from a physical human-robot interaction perspective, where a robot must not only stay inside a pre-defined region, but respect velocity constraints and ensure passivity with respect to external perturbations that may arise from a human or the environment. The proposed control is written in closed-form, behaves well even during singular configurations, and allows any nominal control law to be applied inside the operating region as long as the safety requirements (e.g., velocity) are adhered to. The proposed method is implemented on a 6-DOF robot to demonstrate its effectiveness during a physical human-robot interaction task.

\end{abstract}

\section{Introduction}

Much of today's robotics research aims to place humans in the vicinity of, and in contact with, robots for cooperative or co-existing tasks \cite{Luca2012,Cherubini2016}. This cooperative behaviour exploits the abilities of the human and robot to synergistically complete a task. One obvious requirement in such a physical human-robot interaction (pHRI) setting is to ensure safety. 

Safety in the context of pHRI has different definitions \cite{Zacharaki2020}. Many methods share the philosophy that ``passivity implies safety" \cite{Papageorgiou2020,Music2018, Tadele2014}. Passivity is a desirable property because it is \textit{necessary} to ensure a stable interaction with any unknown environment \cite{Schaft2017,Stramigioli2015}. However, passivity alone may not be ``safe" according to industry standards where machines must satisfy velocity/power/force constraints and stay within an operating region \cite{isosafety2011}. Even methods that focus on ISO (International Organization for Standardization) standards or passivity may succumb to ``ill-posedness", i.e., they may fail under singular/non-full rank Jacobians  \cite{Navarro2016, Papageorgiou2020,Music2018}. Control laws should be robust, i.e., provide asymptotic stability guarantees \cite{Tadele2014} and be well-defined in the robot workspace to ensure safety. For a full survey of safe pHRI see \cite{Zacharaki2020,Tadele2014}. 

To address safety, we focus on the concept of ``novel, robust, and generalizable safety methods" and compliance of the robot to human actions \cite{Zacharaki2020}. We aim to augment existing work by designing a control law with respect to a safe operating region in which a human and robot may be in close proximity or in contact. In this region, the robot must obey strict velocity constraints to minimize injury  \cite{isosafety2011}. For generality, we allow \textit{any} existing pHRI controller \cite{Zacharaki2020}, referred to as the ``nominal" controller, to be implemented inside the safe operating region, while respecting the velocity constraints. Thus despite any dangerous behaviour of the nominal control law, the system will remain inside the safe operating region and satisfy velocity constraints. Finally, in the event that a human or environment pushes the robot outside of the safe operating region, the control law will apply a passive restoring force to return the robot to the safe operating region. Our aim is to design a control law to realize this generalized, robust, and passive concept of safety.

To address this, we consider existing ``safety-critical" controllers \cite{Ames2019, Rauscher2016, Hsu2015, wences2020correct,Singletary2020}. Safety-critical control prioritizes safety, while attempting to implement a nominal control. Those methods are based on control barrier functions, which are shown to be more general, rigorously proven methods compared to artificial potential fields, which are common in existing pHRI techniques \cite{Singletary2020a, Zacharaki2020}. Although many safety-critical controllers exist, they do not satisfy the concept of ``safety" considered here. ``Reciprocal" type barriers are not well-defined outside of the operating region and may require exceedingly high (possibly unbounded) control actions to ensure safety \cite{Rauscher2016,Hsu2015}. Methods based on the more recent zeroing control barrier function formulation do not ensure passivity outside the operating region \cite{Ames2019,wences2020correct,Singletary2020}. We note a promising energy-based form of zeroing control barrier functions from \cite{Singletary2020}, however that method is not applicable here as it is not well-defined everywhere outside of the operating region and does not address passivity/robustness. 

In this paper, we develop a novel, closed-form control strategy to ensure safety of a mechanical system in the presence of a human. In contrast to \cite{Singletary2020}, the proposed approach exploits the energy-based barrier function to ensure passivity, robustness, and satisfaction of velocity constraints. The approach also admits any nominal control law \cite{Zacharaki2020} in a \textit{pre-defined} subset of the operating region. Our technique applies to joint and task space operating regions, and behaves well despite singularities that may be encountered in the Jacobian of the forward kinematics. The results are implemented on a 6-DOF robotic arm during a pHRI task.
 
\textit{Notation}: The inequality $A\leq B$ for square matrices $A$ and $B$ means that $B-A$ is positive semi-definite. The interior and boundary of a set $\myset{A}$ are denoted $\mysetint{A}$ and $\partial \myset{A}$, respectively. An extended class-$\mathcal{K}$ function $\alpha: (-b, a) \to \mathbb{R}$, for $a,b \in \mathbb{R}_{>0}$, is a continuous function which is strictly increasing and $\alpha(0) = 0$. The Euclidean norm is denoted by $\| \cdot \|_2$. We say a uniformly continuous function $\myvar{x}:\mathbb{R}_{\geq 0} \to \mathbb{R}^n$ asymptotically approaches a set $\myset{X} \subset \mathbb{R}^n$, if as $t \to \infty$, for each $\varepsilon \in \mathbb{R}_{>0}$, $\exists T \in \mathbb{R}_{>0}$, such that $\text{dist}(\myvar{x}(t), \myset{X}) < \varepsilon \ \forall t\geq T$, where $\text{dist}(\myvar{x}, \myset{X}) := \underset{\myvar{z} \in \myset{X}}{\inf} \| \myvar{x} - \myvar{z}\|$.

\section{Preliminaries}
\subsection{System Dynamics}

Consider the following mechanical system:
\begin{equation}\label{eq:nonlinear affine dynamics}
\begin{split}
\myvardot{q} &= \myvar{v}  \\
\myvardot{v} &= M(\myvar{q})^{-1}(-C(\myvar{q}, \myvar{v}) \myvar{v} - F \myvar{v} - \myvar{g}(\myvar{q}) +  \myvar{u}  )
\end{split}
\end{equation}
where $M(\myvar{q}) \in \mathbb{R}^{n \times n}$ is the inertia matrix, $C(\myvar{q}, \myvar{v}) \in \mathbb{R}^{n\times n}$ is the Coriolis and centrifugal matrix, $\myvar{g}(\myvar{q}) \in \mathbb{R}^{n}$ is the generalized gravity on the system, $F \in \mathbb{R}^{n \times n}$ is the positive definite damping matrix, and  $\myvar{u} \in \mathbb{R}^m$ is the control input for the general joint and velocity states $\myvar{q}, \myvar{v} \in \mathbb{R}^n$, respectively. Let $(\myvar{q}(t, \myvar{q}_0), \myvar{v}(t,\myvar{v}_0)) \in \mathbb{R}^{2n}$ be the solution of \eqref{eq:nonlinear affine dynamics} starting at $t = 0$, which for ease of notation is denoted by $(\myvar{q}, \myvar{v})$. 

Here we consider the following well-known properties for mechanical systems \cite{Spong1989}:
\begin{property}:\label{prop:M}
$M(\myvar{q})$ is symmetric and positive-definite such that there exists $\mu_1 \in \mathbb{R}_{>0}$, with $\mu_1 I_{n\times n} \leq M(\myvar{q}), \ \forall \myvar{q} \in \mathbb{R}^n$. 
\end{property}

\begin{property}: \label{prop:skew symmetric}
$\myvar{x}^T \left( \dot{M}(\myvar{q}) - 2 C(\myvar{q}, \myvar{v}) \right) \myvar{x} = 0$, $\forall \myvar{x} \in \mathbb{R}^n$.
\end{property}

\subsection{Problem Formulation}

Let $c: \mathbb{R}^n \to \mathbb{R}$ be a continuously differentiable function that encodes the constraint set defined as:
\begin{equation}\label{eq:Q set}
\myset{Q} = \{ \myvar{q} \in \mathbb{R}^n: c(\myvar{q}) \geq 0\}
\end{equation}
The constraint function $c(\myvar{q})$ represents any position-based constraint, written in either joint-space or task-space variables. Here we address the case when $\myset{Q}$ is compact, but can otherwise be convex or non-convex.  Examples include  a region bounded away from a human, a region of non-singular configurations, or a pre-defined workspace for the robot. 

We define the velocity constraint set as:
\begin{equation}\label{eq:V set}
    \myset{V} = \{ \myvar{q} \in \mathbb{R}^n: \| \myvar{v} \|_2^2 \leq \bar{v} \}
\end{equation}
for some maximum velocity bound $\bar{v} \in \mathbb{R}_{>0}$.

We now formally define the problem of designing a control law to enforce safety for mechanical systems:
\begin{problem}
Given the system \eqref{eq:nonlinear affine dynamics}, a nominal control law $\myvar{u}_{nom}: \mathbb{R}^n \times \mathbb{R}^n \times \mathbb{R}_{\geq 0} \to \mathbb{R}^n $, and the compact, non-empty constraint sets $\myset{Q}, \myset{V}$ defined by \eqref{eq:Q set} and \eqref{eq:V set}, define a control law $\myvar{u}$ that ensures:
\begin{enumerate}
    \item if $(\myvar{q}(0), \myvar{v}(0)) \in \myset{Q} \times \myset{V}$, then $(\myvar{q}(t), \myvar{v}(t))$ remains in $\myset{Q} \times \myset{V}$ for all $t \geq 0$
    
    \item $\myvar{u} = \myvar{u}_{nom}$ in a pre-defined subset of $\myset{Q} \times \myset{V}$. 
    
    \item if $(\myvar{q}(0), \myvar{v}(0)) \notin \myset{Q} \times \myset{V}$, then the system is passive and furthermore $(\myvar{q}(t), \myvar{v}(t))$ asymptotically approaches $\myset{Q} \times \myset{V}$. 
\end{enumerate} 
\end{problem}

\section{Proposed Solution}

\subsection{Background}\label{ssec:background}

Here we introduce the existing work on the energy-based barrier function from \cite{Singletary2020} using the notation here: 

\begin{definition}[\cite{Singletary2020}]
Given a kinematic safety constraint expressed as a function $c: \myset{Q} \subset \mathbb{R}^n \to \mathbb{R}$, only dependent on $\myvar{q}$, and the corresponding safe set $\myset{S} = \{ (\myvar{q}, \myvar{v}) \in \myset{Q} \times \mathbb{R}^n: c(\myvar{q}) \geq 0\}$, the associated energy-based safety constraint is defined as:
\begin{equation}\label{eq:ZCBF h}
    h(\myvar{q},\myvar{v}) = k_h c(\myvar{q}) - \frac{1}{2}\myvar{v}^T M(\myvar{q}) \myvar{v} 
\end{equation}
with $k_h \in \mathbb{R}_{>0}$. The corresponding energy-based safe set is: $S_D := \{(\myvar{q}, \myvar{v}) \in \myset{Q} \times \mathbb{R}^n: h(\myvar{q}, \myvar{v}) \geq 0\}$.
\end{definition}

Forward invariance of $\myset{S}_D$ is ensured in Theorem 2 of \cite{Singletary2020} under the following control law:

\begin{align}\label{eq:zcbf kinematic qp}
\begin{split}
\myvar{u}^*(\myvar{q}, \myvar{v}) \hspace{0.1cm} = \hspace{0.1cm} & \underset{\myvar{u} \in \mathbb{R}^m} {\text{argmin}}
\hspace{.3cm} \| \myvar{u} -\myvar{u}_{\text{nom}}(\myvar{q}, \myvar{v}, t) \|^2_2  \\
& \text{s.t.} \hspace{.1cm}  \myvar{v}^T ( k_h \nabla c(\myvar{q})  + \myvar{g}(\myvar{q}) - \myvar{u})  \geq - \alpha(h(\myvar{q}, \myvar{v})) 
\end{split}
\end{align}
where $\alpha$ is an extended class-$\mathcal{K}$ function.

There are a few important aspects to note regarding the control law \eqref{eq:zcbf kinematic qp}. The control formulation is dependent on the zeroing control barrier function formulation from \cite{Ames2019}. Under ideal conditions, this control should work well to ensure safety. However the control law is not well defined outside of $\myset{S}_D$. To see this, consider any point for which $h < 0$ (i.e. the system is outside the set $\myset{S}_D$), where we note that $-\alpha(h) > 0$. Then the constraint in \eqref{eq:zcbf kinematic qp} can never be satisfied for a static system, i.e., when $\myvar{v} = 0$. Furthermore as $\|\myvar{v}\| \to 0$, $\|\myvar{u} \| \to \infty$ in an attempt to satisfy \eqref{eq:zcbf kinematic qp}. This ill-posedness is noted in Definition 1 of \cite{Singletary2020} where the barrier condition does not need to hold everywhere outside of  $\myset{S}_D$. Should a perturbation arise (e.g., from human-robot interaction) then the control law may in fact be dangerous in the presence of a human. Here we extend the approach from \cite{Singletary2020} to ensure safety, passivity, and robustness of mechanical systems in the presence of humans.  

\subsection{Passivity-based Set-Invariance Control for Mechanical Systems}

In this section, we address safety, robustness, and passivity of the mechanical system with respect to a safe operating set. The idea here is to exploit properties of mechanical systems, namely the skew-symmetric Property \ref{prop:skew symmetric} to ensure safety and passivity. Recall the function \eqref{eq:ZCBF h}. We will refer to this function as the ``energy-based barrier function." We similarly define the ``safe set" as:
\begin{equation}\label{eq:ZCBF C}
    \myset{C}= \{ (\myvar{q},\myvar{v}) \in \mathbb{R}^n \times \mathbb{R}^n: h(\myvar{q},\myvar{v}) \geq 0 \}
\end{equation}

Our first task to comply with standard safety specifications is to satisfy velocity constraints, i.e., $\myvar{v} \in \myset{V}$. To do so, first we inspect the energy-based barrier function \eqref{eq:ZCBF h}.
The velocity term in $h$ acts to moderate the speed at which the system approaches the boundary of $\myset{Q}$. The velocity is moderated by the inertia matrix, which intuitively means that for systems with large inertia, the system will quickly approach the boundary of the safe set $\myset{C}$ (i.e. $h = 0$). On the other hand, systems with low inertia can approach the boundary at higher speeds with milder consequences as they can be slowed down more easily. 

We can moderate the speed of the system in $\myset{C}$ by tuning $k_h$. The reason for this is that $h \geq 0$ implies that $\frac{1}{2} \myvar{v}^T M(\myvar{q}) \myvar{v} \leq k_h c(\myvar{q})$. For a compact $\myset{Q}$, it is straightforward to see that $h(\myvar{q}, \myvar{v}) \geq 0$ ensures bounded velocities in $\myset{C}$. In the following Lemma we show that for certain $k_h$, if the state $(\myvar{q}, \myvar{v})$ remains in $\myset{C}$, then the state also remains in the constraint sets $\myset{Q}$ and $\myset{V}$: 

\begin{lemma}\label{lem:compact C}
Consider the system \eqref{eq:nonlinear affine dynamics} with constraint sets \eqref{eq:Q set} and \eqref{eq:V set}. If $\myset{Q}$ is compact, then the set $\myset{C}$ is also compact. If additionally $k_h \leq \dfrac{\mu_1 \bar{v}}{2 \bar{c}}$ for $\bar{c} = \max_{\myvar{q} \in \myset{Q}} c(\myvar{q})$, then $\myset{C} \subset \myset{Q} \times \myset{V}$.
\end{lemma}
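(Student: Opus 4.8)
The plan is to establish the two claims of Lemma~\ref{lem:compact C} in sequence: first compactness of $\myset{C}$, then the inclusion $\myset{C} \subset \myset{Q} \times \myset{V}$ under the stated bound on $k_h$.

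For compactness, I would argue that $\myset{C}$ is closed and bounded. Closedness follows immediately because $h$ is continuous (both $c$ and $M$ are continuous) and $\myset{C}$ is the preimage of the closed set $[0, \infty)$ intersected with the closed set $\myset{Q} \times \mathbb{R}^n$; note that $\myset{C} \subset \myset{Q} \times \mathbb{R}^n$ since $h(\myvar{q}, \myvar{v}) \geq 0$ forces $k_h c(\myvar{q}) \geq \tfrac{1}{2}\myvar{v}^T M(\myvar{q})\myvar{v} \geq 0$, hence $c(\myvar{q}) \geq 0$, i.e. $\myvar{q} \in \myset{Q}$. For boundedness, the $\myvar{q}$-component is bounded because $\myset{Q}$ is compact. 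For the $\myvar{v}$-component, I would use Property~\ref{prop:M}: on $\myset{C}$ we have $\tfrac{1}{2}\myvar{v}^T M(\myvar{q}) \myvar{v} \leq k_h c(\myvar{q}) \leq k_h \bar{c}$, and since $\mu_1 I \leq M(\myvar{q})$ gives $\tfrac{\mu_1}{2}\|\myvar{v}\|_2^2 \leq \tfrac{1}{2}\myvar{v}^T M(\myvar{q})\myvar{v}$, we obtain $\|\myvar{v}\|_2^2 \leq \tfrac{2 k_h \bar{c}}{\mu_1}$, a uniform bound. Here $\bar{c} = \max_{\myvar{q}\in\myset{Q}} c(\myvar{q})$ is well-defined and finite by continuity of $c$ on the compact set $\myset{Q}$. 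Thus $\myset{C}$ is closed and bounded, hence compact.

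For the inclusion, the containment $\myset{C} \subset \myset{Q} \times \mathbb{R}^n$ is already shown above, so it only remains to verify the velocity constraint, i.e. that every $(\myvar{q}, \myvar{v}) \in \myset{C}$ satisfies $\|\myvar{v}\|_2^2 \leq \bar{v}$. This is exactly where the hypothesis $k_h \leq \tfrac{\mu_1 \bar{v}}{2\bar{c}}$ enters: chaining the velocity bound derived above with this inequality gives $\|\myvar{v}\|_2^2 \leq \tfrac{2 k_h \bar{c}}{\mu_1} \leq \tfrac{2 \bar{c}}{\mu_1}\cdot\tfrac{\mu_1 \bar{v}}{2\bar{c}} = \bar{v}$, so $\myvar{q} \in \myset{V}$ (using the definition of $\myset{V}$ in terms of the velocity). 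Combined with $\myvar{q}\in\myset{Q}$, this yields $(\myvar{q},\myvar{v}) \in \myset{Q}\times\myset{V}$.

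I do not expect a genuine obstacle here, as the argument is a direct chain of inequalities built on Property~\ref{prop:M} and the compactness of $\myset{Q}$. The only point requiring mild care is the well-definedness of $\bar{c}$ and the implicit assumption $\bar{c} > 0$ so that the bound on $k_h$ is meaningful (this holds whenever $\myset{Q}$ has nonempty interior, which is needed for the problem to be nontrivial); I would note this rather than belabor it. A second subtlety worth flagging is the slightly unusual way $\myset{V}$ is written in \eqref{eq:V set} as a subset of $\myvar{q}$-space though defined by a velocity condition, so I would phrase the final conclusion to match the paper's convention that membership in $\myset{V}$ encodes the constraint $\|\myvar{v}\|_2^2 \leq \bar{v}$.
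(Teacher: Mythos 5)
Your proof is correct and takes essentially the same route as the paper's: the identical chain of inequalities $\tfrac{\mu_1}{2}\|\myvar{v}\|_2^2 \leq \tfrac{1}{2}\myvar{v}^T M(\myvar{q})\myvar{v} \leq k_h c(\myvar{q}) \leq k_h \bar{c}$, combined with $k_h \leq \tfrac{\mu_1 \bar{v}}{2\bar{c}}$, to get $\myvar{q} \in \myset{Q}$, $\|\myvar{v}\|_2^2 \leq \bar{v}$, and hence the inclusion. Your treatment of compactness is in fact slightly more careful than the paper's (you make closedness explicit via continuity of $h$ and invoke closed-plus-bounded, where the paper only argues boundedness of $\myvar{v}^T M(\myvar{q})\myvar{v}$), and your flagging of the $\myvar{q}$-versus-$\myvar{v}$ slip in \eqref{eq:V set} is apt, but these are refinements of the same argument rather than a different one.
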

\begin{proof}
Consider $\myset{C}$. For $h(\myvar{q},\myvar{v}) \geq 0$ it follows that $\frac{1}{2}\myvar{v}^T M(\myvar{q}) \myvar{v} \leq k_h c(\myvar{q})$. Since $M(\myvar{q})$ is positive-definite and $c(\myvar{q})$ is bounded in $\myset{Q}$, it follows that $\myvar{v}^T M(\myvar{q}) \myvar{v}$ is bounded and thus $\myset{C}$ is compact.

Now we show that if $(\myvar{q},\myvar{v}) \in \myset{C}$, then $\myvar{q} \in \myset{Q}$. We note that since $M(\myvar{q})$ is positive-definite $\myvar{v}^T M(q) \myvar{v} \geq 0$. Now for $(\myvar{q},\myvar{v}) \in \myset{C}$, $h \geq 0$ implies that $k_h c(\myvar{q}) \geq \myvar{v}^T M(\myvar{q})v \geq 0$, and thus $\myvar{q} \in \myset{Q}$. 

Finally, by Property \ref{prop:M} and the fact that $\frac{1}{2}\myvar{v}^T M(\myvar{q}) \myvar{v} \leq k_h c(\myvar{q}) \leq k_h \bar{c}$ for $(\myvar{q}, \myvar{v}) \in \myset{C}$, it follows that $\frac{\mu_1}{2} \| \myvar{v} \|_2^2 \leq \frac{1}{2}\myvar{v}^T M(\myvar{q}) \myvar{v} \leq k_h \bar{c}$. Solving for $\myvar{v}$ yields: $ \| \myvar{v} \|_2^2 \leq \frac{2 k_h \bar{c}}{\mu_1}$, and substitution of $k_h \leq \frac{\mu_1 \bar{v}}{2 \bar{c}}$ yields $\| \myvar{v} \|_2^2 \leq \bar{v}$. Thus, if $(\myvar{q}, \myvar{v}) \in \myset{C}$, then $\myvar{v} \in \myset{V}$. Since for any $(\myvar{q}, \myvar{v}) \in \myset{C}$, $(\myvar{q}, \myvar{v}) \in \myset{Q} \times \myset{V}$, it follows that $\myset{C} \subset \myset{Q} \times \myset{V}$.
\end{proof}

Lemma \ref{lem:compact C} ensures that $h$, is properly defined such that the task of ensuring safety i.e. $(\myvar{q}, \myvar{v}) \in \myset{Q}\times \myset{V}$, boils down to ensuring $\myset{C}$ can be rendered forward invariant.

To show forward invariance of $\myset{C}$, we depart from the ``zeroing control barrier function" framework from \cite{Ames2019, Singletary2020}. Instead, we exploit the fundamentals of set invariance control (see Brezis' theorem \cite{Redheffer1972}), which later allows us to ensure not only passivity, but also robustness of the safe set. 

Brezis' theorem states that to ensure forward invariance of a set, the vector field of the dynamical system at the boundary of the  set must be inside the tangent cone of the set. We re-write this condition in our notation as follows:
\begin{equation}\label{eq:brezi cond}
    \dot{h}(\myvar{q}, \myvar{v}) \geq 0, \ \forall (\myvar{q}, \myvar{v}) \in \mysetbound{C}
\end{equation}
We note that for the above condition to hold, we require that the closed-loop dynamics are locally Lipschitz in an open set $\myset{D}$ containing $\myset{C}$. Our next task is thus to construct a locally Lipschitz $\myvar{u}$ such that \eqref{eq:nonlinear affine dynamics} under $\myvar{u}$ satisfies \eqref{eq:brezi cond}. To do so, we continue by differentiating $h$ which yields:
\begin{align*}
    \dot{h} & = -\myvar{v}^T M \myvardot{v} - \frac{1}{2} \myvar{v}^T \dot{M} \myvar{v} + k_h \nabla c(\myvar{q})^T \myvar{v} \\
    & = -\myvar{v}^T \left( -C \myvar{v} -F\myvar{v}- \myvar{g} + \myvar{u} \right) - \frac{1}{2} \myvar{v}^T \dot{M} \myvar{v} + k_h \nabla c(\myvar{q})^T \myvar{v} \\
    & = \myvar{v}^T (k_h \nabla c(\myvar{q}) + \myvar{g} - \myvar{u} + F \myvar{v})
\end{align*}
Note that Property \ref{prop:skew symmetric} i.e. $\myvar{v}^T(\frac{1}{2}\dot{M} - C) \myvar{v} = 0$, is used in the above sequence of equations. Furthermore, since $F$ is positive definite, it follows that:
\begin{equation}\label{eq:hdot inequality}
 \dot{h} \geq \myvar{v}^T(k_h \nabla c(\myvar{q}) + \myvar{g} - \myvar{u})
\end{equation}

Next, we define the point-wise set $\myset{K}_u(q,v)$ for which any $\myvar{u}(\myvar{q},\myvar{v}) \in \myset{K}_u(\myvar{q},\myvar{v})$ will ensure forward invariance of $\myset{C}$:
\begin{align}\label{eq:Ku set}
\myset{K}_u(\myvar{q},\myvar{v}) = \{ \myvar{u} \in \mathbb{R}^n: \myvar{v}^T \left( k_h \nabla c(\myvar{q})+ \myvar{g}(\myvar{q}) - \myvar{u} \right)  
 \geq 0 \}
\end{align}

We are now ready to guarantee forward invariance of $\myset{C}$:
\begin{thm}\label{thm:zcbf}
Suppose $\myset{Q}$ defined by \eqref{eq:Q set} is compact for a continuously differentiable function $c: \mathbb{R}^n \to \mathbb{R}$. Let $\myset{D} \subset \mathbb{R}^n \times \mathbb{R}^n$ be any bounded, open set containing $\myset{C}$ such that $\myset{C} \subset \myset{D}$ for $\myset{C}$ defined by \eqref{eq:ZCBF C} and $h$ defined by \eqref{eq:ZCBF h}. If $\nabla c(\myvar{q})$ is locally Lipschitz for all $(\myvar{q},\myvar{v}) \in \myset{D}$, then there always exists a locally Lipschitz $\myvar{u} \in \myset{K}_u(\myvar{q}, \myvar{v})$ for all $(\myvar{q}, \myvar{v}) \in \myset{D}$. Furthermore, for any $\myvar{u} $ that is locally Lipschitz on $\myset{D}$ and satisfies: $\myvar{u} \in \myset{K}_u(\myvar{q}, \myvar{v})$ for all $(\myvar{q}, \myvar{v}) \in \mysetbound{C}$, if $\myvar{q}(0), \myvar{v}(0)) \in \myset{C}$, then \eqref{eq:nonlinear affine dynamics} under $\myvar{u}$ ensures $(\myvar{q}(t), \myvar{v}(t)) \in \myset{C}$ for all $t\geq 0$.  
\end{thm}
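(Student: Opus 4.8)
The statement splits into an existence claim and a forward-invariance claim, and I would treat them in that order. For existence, the plan is to exhibit an explicit locally Lipschitz selection from $\myset{K}_u(\myvar{q},\myvar{v})$. Inspecting \eqref{eq:Ku set}, the defining inequality is $\myvar{v}^T(k_h \nabla c(\myvar{q}) + \myvar{g}(\myvar{q}) - \myvar{u}) \geq 0$, so the choice $\myvar{u} = k_h \nabla c(\myvar{q}) + \myvar{g}(\myvar{q})$ annihilates the inner product and satisfies the constraint with equality at every $(\myvar{q},\myvar{v}) \in \myset{D}$. Since $\nabla c$ is locally Lipschitz by hypothesis and $\myvar{g}$ is a smooth (hence locally Lipschitz) function of $\myvar{q}$ for a mechanical system, this $\myvar{u}$ is locally Lipschitz on $\myset{D}$. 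This simultaneously shows $\myset{K}_u(\myvar{q},\myvar{v}) \neq \emptyset$ and that a locally Lipschitz representative always exists.

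For forward invariance, I would assemble the following chain. First, since $\myvar{u}$ is locally Lipschitz on $\myset{D}$ and $M^{-1}$, $C$, $F$, $\myvar{g}$ are locally Lipschitz there ($M$ is positive definite by Property \ref{prop:M}, so $M^{-1}$ is well defined and smooth), the closed-loop vector field of \eqref{eq:nonlinear affine dynamics} is locally Lipschitz on $\myset{D}$, which guarantees local existence and uniqueness of solutions. Next, by Lemma \ref{lem:compact C} the set $\myset{C}$ is compact, hence closed, so Brezis' theorem applies: $\myset{C}$ is rendered forward invariant provided the vector field lies in the tangent cone to $\myset{C}$ at every boundary point, which for the sublevel-set description \eqref{eq:ZCBF C} is exactly condition \eqref{eq:brezi cond}, namely $\dot{h} \geq 0$ on $\mysetbound{C}$.

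The decisive step is to verify \eqref{eq:brezi cond} from the hypothesis $\myvar{u} \in \myset{K}_u(\myvar{q},\myvar{v})$ on $\mysetbound{C}$. Here I would invoke the already-derived inequality \eqref{eq:hdot inequality}, $\dot{h} \geq \myvar{v}^T(k_h \nabla c(\myvar{q}) + \myvar{g} - \myvar{u})$, which follows from Property \ref{prop:skew symmetric} and positive definiteness of $F$. On $\mysetbound{C}$ the membership $\myvar{u} \in \myset{K}_u$ is precisely $\myvar{v}^T(k_h \nabla c(\myvar{q}) + \myvar{g} - \myvar{u}) \geq 0$, so \eqref{eq:hdot inequality} immediately yields $\dot{h} \geq 0$, establishing \eqref{eq:brezi cond}. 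Forward invariance of $\myset{C}$ then follows, and since $\myset{C}$ is compact, trajectories starting in $\myset{C}$ stay bounded and hence extend to all $t \geq 0$ with no finite escape time; thus $(\myvar{q}(t),\myvar{v}(t)) \in \myset{C}$ for all $t \geq 0$.

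The main obstacle I anticipate is the rigorous justification that the tangent-cone condition of Brezis' theorem reduces exactly to the scalar inequality $\dot{h} \geq 0$. This identification needs a constraint-qualification-type nondegeneracy, namely $\nabla h \neq 0$ on $\mysetbound{C}$, so that the tangent cone is the half-space $\{\myvar{z} : \nabla h^T \myvar{z} \geq 0\}$. Computing the gradient gives $\nabla_{\myvar{v}} h = -M(\myvar{q})\myvar{v}$ and $\nabla_{\myvar{q}} h = k_h \nabla c(\myvar{q}) - \tfrac{1}{2}\myvar{v}^T \tfrac{\partial M}{\partial \myvar{q}} \myvar{v}$, so $\nabla h$ vanishes on $\mysetbound{C}$ only in the degenerate case $\myvar{v} = 0$ with $\nabla c(\myvar{q}) = 0$ and $c(\myvar{q}) = 0$; this is excluded if $0$ is a regular value of $c$ on $\mysetbound{Q}$, an assumption I would make explicit (or else dispatch these isolated points by a separate argument). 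A secondary technical point is confirming that the local Lipschitz property holds uniformly enough on the bounded open set $\myset{D} \supset \myset{C}$ to invoke the invariance theorem on all of $\mysetbound{C}$.
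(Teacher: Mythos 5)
Your proof is correct and follows essentially the same route as the paper's: the explicit selection $\myvar{u} = \myvar{g}(\myvar{q}) + k_h \nabla c(\myvar{q})$ for existence, the inequality \eqref{eq:hdot inequality} combined with $\myvar{u} \in \myset{K}_u$ to obtain $\dot{h} \geq 0$ on $\mysetbound{C}$, Brezis' theorem for invariance, and compactness of $\myset{C}$ (Lemma \ref{lem:compact C}) to extend the solution to all $t \geq 0$. One remark is worth adding: the constraint-qualification caveat you raise at the end is not excess caution on your part; it is a genuine gap that the paper's own proof leaves open, since the paper rewrites the tangent-cone condition of Brezis' theorem as \eqref{eq:brezi cond} without justification, and that rewriting is valid only where $\nabla h \neq 0$ on $\mysetbound{C}$, i.e., away from points with $\myvar{v} = 0$, $c(\myvar{q}) = 0$, $\nabla c(\myvar{q}) = 0$, exactly as you computed. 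At such degenerate points the membership requirement is vacuous (by \eqref{eq:Ku set}, $\myset{K}_u(\myvar{q}, \myvar{0}) = \mathbb{R}^n$) while the tangent cone can be strictly smaller than a half-space, and the invariance claim can genuinely fail: for $c(\myvar{q}) = -\|\myvar{q}\|_2^2$ all hypotheses of the theorem hold with $\myset{C} = \{(\myvar{0},\myvar{0})\}$, the boundary condition is satisfied by every locally Lipschitz $\myvar{u}$, yet any $\myvar{u}$ with $\myvar{u}(\myvar{0},\myvar{0}) \neq \myvar{g}(\myvar{0})$ drives the state out of $\myset{C}$ immediately. So your proposed regularity assumption (that $0$ is a regular value of $c$, or a separate argument at degenerate points) is actually needed for the second claim to hold as stated, and including it makes your argument more rigorous than the published one.
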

\begin{proof}
Differentiation of $h(\myvar{q},\myvar{v})$ yields \eqref{eq:hdot inequality}. It is clear that the choice of $\myvar{u} = \myvar{g} + k_h \nabla c(\myvar{q})$ is locally Lipschitz and always a member of the set $\myset{K}_u(\myvar{q},\myvar{v})$. Furthermore, $\myvar{u}$ is well-defined for all $(\myvar{q}, \myvar{v}) \in \myset{D}$.

For any $\myvar{u} \in \myset{K}_u(\myvar{q}, \myvar{v})$ locally Lipschitz on $\myset{D}$, it follows that the closed-loop system \eqref{eq:nonlinear affine dynamics} under this $\myvar{u}$ is locally Lipschitz for all $(\myvar{q}, \myvar{v}) \in \myset{D}$ and also $\dot{h} \geq 0$ for any $(\myvar{q}, \myvar{v}) \in \mysetbound{C}$. Since $\myvar{q}(0), \myvar{v}(0) \in \myset{C}$, the unique, uniformly continuous solution $(\myvar{q}(t), \myvar{v}(t))$ exists for $t\in [0, T)$, for some $T \in \mathbb{R}_{>0}$ via Theorem 3.1 of \cite{Khalil2002}, and so Brezis' theorem ensures that $(\myvar{q}(t), \myvar{v}(t)) \in \myset{C}$ for $t \in [0, T)$ \cite{Redheffer1972}. 

Now since $\myset{C}$ is compact, the state $(\myvar{q}, \myvar{v})$ will never leave $\myset{D}$ for which local Lipschitz properties of the closed-loop system hold. Thus we can repeat the previous analysis ad infinitum and extend $T \to \infty$, and so $(\myvar{q}(t), \myvar{v}(t)) \in \myset{C}$ for all $t \geq 0$.
\end{proof}
We note that Theorem \ref{thm:zcbf} requires local Lipschitz continuity of $\nabla c$. In practice, this is not restrictive as in many cases $c(\myvar{q})$ is a twice-continuously differentiable function and so $\nabla c$ is locally Lipschitz. A common example of $c(\myvar{q})$ satisfying this local Lipschitz condition is the spherical/ellipsoidal operating region  (see Section \ref{sec:HRI exp}).

With the analysis herein, we can take the results of Theorem \ref{thm:zcbf} one step further by ensuring robustness and then \textit{passivity} of the system outside of $\myset{C}$. We list these results in order of increasing conditions required on $\myvar{u}$. First we state robustness results:
\begin{thm}\label{thm:robustness}
Suppose the conditions of Theorem \ref{thm:zcbf} hold, and  for all $(\myvar{q}, \myvar{v}) \in \myset{D} \setminus \myset{C}$, $\myvar{u} \in \myset{K}_u(\myvar{q},\myvar{v})$ and $\myvar{u} \neq \myvar{g}(\myvar{q})$. Then for all $(\myvar{q}, \myvar{v}) \in \myset{D}\setminus \myset{C}$, $(\myvar{q}(t), \myvar{v}(t))$ asymptotically approaches $\myset{C}$.
\end{thm}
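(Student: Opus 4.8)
The plan is to use $V := -h$ as a Lyapunov-type function on $\myset{D}\setminus\myset{C}$ together with an invariance (LaSalle) argument. First I would record the exact form of $\dot h$ obtained in the derivation preceding \eqref{eq:hdot inequality}, namely $\dot h = \myvar{v}^T\!\left(k_h \nabla c(\myvar{q}) + \myvar{g}(\myvar{q}) - \myvar{u}\right) + \myvar{v}^T F \myvar{v}$. By the standing hypothesis $\myvar{u}\in\myset{K}_u(\myvar{q},\myvar{v})$ the first term is nonnegative by the definition \eqref{eq:Ku set}, and since $F$ is positive definite the second term is nonnegative. Hence $\dot h \ge 0$ (equivalently $\dot V \le 0$) on $\myset{D}\setminus\myset{C}$, so $h$ is nondecreasing along the trajectory, which is therefore trapped in the superlevel set $\{h \ge h_0\}$ with $h_0 := h(\myvar{q}(0),\myvar{v}(0)) < 0$.

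Granting that the trajectory remains in a compact subset of $\myset{D}$ (discussed below), its $\omega$-limit set $\Omega$ is nonempty, compact and invariant, and because $h$ is nondecreasing and bounded it converges to some $h^\ast$, forcing $h \equiv h^\ast$ and therefore $\dot h \equiv 0$ on $\Omega$. As $\dot h$ is a sum of two nonnegative terms, $\dot h = 0$ implies $\myvar{v}^T F \myvar{v} = 0$, and positive definiteness of $F$ gives $\myvar{v} = 0$ on $\Omega$. Invariance of $\Omega$ then forces $\myvardot{v} \equiv 0$ on $\Omega$; but evaluating \eqref{eq:nonlinear affine dynamics} at $\myvar{v} = 0$ yields $\myvardot{v} = M(\myvar{q})^{-1}\!\left(\myvar{u} - \myvar{g}(\myvar{q})\right)$, which is nonzero whenever $\myvar{u}\neq\myvar{g}(\myvar{q})$ because $M(\myvar{q})$ is invertible by Property \ref{prop:M}. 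The hypothesis $\myvar{u}\neq\myvar{g}(\myvar{q})$ on $\myset{D}\setminus\myset{C}$ thus rules out any point of $\Omega$ lying in $\myset{D}\setminus\myset{C}$, so $\Omega\subseteq\myset{C}$. Since the trajectory evolves in a compact set on which the closed-loop vector field is bounded, $(\myvar{q}(t),\myvar{v}(t))$ is uniformly continuous and converges to its $\omega$-limit set $\Omega\subseteq\myset{C}$, which is exactly the claimed convergence.

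The main obstacle is the boundedness/non-escape step assumed above: the invariance principle and the control hypotheses ($\myvar{u}\in\myset{K}_u$, $\myvar{u}\neq\myvar{g}$, local Lipschitzness) are only valid while the trajectory stays in $\myset{D}$. I would close this gap using the monotonicity $\dot h \ge 0$: the trajectory cannot escape the forward-invariant set $\{h \ge h_0\}$, and since $\myset{C}$ is compact and $\myset{D}$ bounded, choosing $\myset{D}$ to contain the connected component of $\{h \ge h_0\}$ through $(\myvar{q}(0),\myvar{v}(0))$ confines the trajectory to a compact subset of $\myset{D}$, mirroring how Theorem \ref{thm:zcbf} keeps a trajectory in $\myset{C}\subset\myset{D}$. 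On $\partial\myset{C}$, where $h=0$ but $\myvar{v}$ may be nonzero, Theorem \ref{thm:zcbf} already guarantees the trajectory stays in $\myset{C}$, so the limit-set analysis is consistent there. A secondary subtlety is that LaSalle requires the closed loop to be autonomous; this holds here because the hypotheses fix $\myvar{u}$ as a function of $(\myvar{q},\myvar{v})$ only on $\myset{D}\setminus\myset{C}$, and were a time-varying $\myvar{u}$ admitted one would instead combine $\int_0^\infty \myvar{v}^T F \myvar{v}\,dt < \infty$ with Barbalat's lemma.
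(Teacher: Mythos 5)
Your proposal is correct and takes essentially the same route as the paper: both use the negative of $h$ as a Lyapunov-like function together with an invariance-principle (LaSalle / $\omega$-limit set) argument whose crux is the identical key step that a trajectory remaining stationary in $\myset{D}\setminus\myset{C}$ would force $\myvar{v}\equiv 0$, hence $\myvardot{v}=0$ and $\myvar{u}=\myvar{g}(\myvar{q})$, contradicting the hypothesis $\myvar{u}\neq\myvar{g}(\myvar{q})$. The differences are cosmetic: the paper replaces $-h$ by a piecewise-cubic $C^1$ function that vanishes on $\myset{C}$ so that the textbook LaSalle theorem applies on all of $\myset{D}$, and it simply asserts the forward completeness/confinement in $\myset{D}$ that you rightly flag as the delicate step.
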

\begin{proof}
From Theorem \ref{thm:zcbf}, it follows that $\myset{C}$ is forward invariant and furthermore $(\myvar{q}(t), \myvar{v}(t))$ is forward complete in $\myset{D}$. Consider the following continuously differentiable Lyapunov-like function:
\begin{align*}
    V(\myvar{q}, \myvar{v}) = \begin{cases}
    -h(\myvar{q}, \myvar{v}), \text{ if } h(\myvar{q}, \myvar{v}) \leq -1 \\
    h(\myvar{q}, \myvar{v})^3 + 2h(\myvar{q}, \myvar{v})^2, \text{ if } -1 \leq h(\myvar{q}, \myvar{v}) \leq 0 \\
    0, \text{ if } h(\myvar{q}, \myvar{v}) \geq 0 \end{cases}
\end{align*}

We differentiate $V$ with respect to the three cases as follows. First, for $h \leq -1$ and from the definition of $\myset{K}_u$: $\dot{V} = -\dot{h} = -\myvar{v}^T F \myvar{v} - \myvar{v}^T(k_h \nabla c(\myvar{q}) + \myvar{g} - \myvar{u}) \leq 0$. Next, when $h \in [-1, 0]$, $\dot{V} = (3h^2 + 4h)\dot{h}$. Since for $h \in [-1,0]$, $3h^2 + 4h \leq 0$ and $\dot{h} \geq 0$ from $\myvar{u} \in \myset{K}_u$, it follows that $\dot{V} \leq 0$. Finally for $h \geq 0$, i.e. in $\myset{C}$, $\dot{V} = 0$. Thus by construction, $\dot{V} \leq 0$ in $\myset{D} \setminus \myset{C}$ and $\dot{V} = 0$ in $\myset{C}$. Thus $\dot{V}$ is negative semi-definite and $V$ is decreasing or possibly constant in $\myset{D} \setminus \myset{C}$. 

Also, we claim that in $\myset{D} \setminus \myset{C}$ (i.e when $h < 0$), $\dot{V} = 0$ if and only if $\myvar{v} = 0$. To see this, we note that when $h \leq -1$, $\myvar{v} = 0$ implies $\dot{V} = 0$. Now if $\dot{V} = 0$, then $-\myvar{v}^T F \myvar{v} - \myvar{v}^T(k_h \nabla c(\myvar{q}) + \myvar{g} - \myvar{u}) = 0$. Since $\myvar{u} \in \myset{K}_u$, then both terms of $\dot{V}$ are non-positive. Thus for $\dot{V} = 0$, then $\myvar{v} = 0$ must hold since $F$ is positive-definite. Similarly when $h \in [-1, 0)$, $\dot{V} = (3h^2 + 4h) \dot{h}$ for which $3h^2 + 4h < 0$. Thus using the same argument for when $h \leq -1$, we see that $\dot{V} = 0$ if and only if $\myvar{v} = 0$ for $h \in [-1, 0)$, and the claim holds for $h < 0$ i.e. in $\myset{D} \setminus \myset{C}$. Thus it follows that the set $\myset{A} = \{(\myvar{q}, \myvar{v} \in \myset{D}\setminus \myset{C}: \dot{V} = 0\} $ is equivalent to the set $\myset{B} = \{(\myvar{q}, \myvar{v}) \in \myset{D} \setminus \myset{C}: \| \myvar{v} \| = 0 \}$. 

Next we show that no solution of \eqref{eq:nonlinear affine dynamics} can stay identically in the set $\myset{A}$. We prove this by contradiction. Since the set $\myset{A}$ is equivalent to $\myset{B}$, we suppose there is a stationary state, i.e., $\myvar{v} \equiv 0$ for $(\myvar{q}, \myvar{v}) \in \myset{D} \setminus \myset{C}$. Since $\myvar{v} \equiv 0$,  $\myvardot{v} = 0$ and substitution into \eqref{eq:nonlinear affine dynamics} yields: $0 = M^{-1} (\myvar{u} - \myvar{g}) = \myvar{u} - \myvar{g} \implies \myvar{u} = \myvar{g}$. However, by assumption $\myvar{u} \neq \myvar{g}$. Thus a contradiction and so no solution can stay identically in $\myset{A}$. 

Finally, since $\myset{C}$ is forward invariant from Theorem \ref{thm:zcbf} and the solution $(\myvar{q}(t), \myvar{v}(t))$ is bounded via Lemma \ref{lem:compact C}, there exists a non-empty, largest invariant set $M \subset \myset{C}$ via Lemma 4.1 of \cite{Khalil2002}. Now since no solution can stay in $\myset{A}$, then the largest invariant set contained in $\myset{D}$, is $\myset{M} \subset \myset{C}$. From Theorem 4.4 of \cite{Khalil2002}, this implies that for any $(\myvar{q}(t), \myvar{v}(t)) \in \myset{D} \setminus \myset{C}$, $(\myvar{q}(t), \myvar{v}(t))$ asymptotically approaches $\myset{M} \subset \myset{C}$. Thus the solution  asymptotically approaches $\myset{C}$.
\end{proof}

An immediate result of Theorem \ref{thm:robustness} is that the system is robust to perturbations either from the environment or from model uncertainty. To incorporate model uncertainties into the safety-critical control, we further shrink the set $\myset{C}$ using a robustness margin that is dependent on the upper bound of the perturbation. We refer to \cite{Xu2015a} for further reference on how this can be done.

To state passivity results, we consider the following system dynamics with an exogenous input to the system, $\myvar{\mu} \in \mathbb{R}^n$, which can represent a disturbance from the human or the environment:
\begin{equation}\label{eq:nonlinear affine dynamics input}
\begin{split}
\myvardot{q} &= \myvar{v}  \\
\myvardot{v} &= M(\myvar{q})^{-1}(-C(\myvar{q}, \myvar{v}) \myvar{v} - F \myvar{v} - \myvar{g}(\myvar{q}) +  \myvar{u} + \myvar{\mu} )
\end{split}
\end{equation}

\begin{cor}\label{thm:passivity} 
Suppose the conditions of Theorem \ref{thm:zcbf} hold for the system \eqref{eq:nonlinear affine dynamics input} for any $\myvar{\mu} \in \mathbb{R}^n$. If $\myvar{u} = \myvar{g}(\myvar{q}) + k_h \nabla c(\myvar{q})$ in $\myset{D}\setminus \mysetint{C}$, then the system is passive with respect to $\myvar{\mu}$ in $\myset{D} \setminus \mysetint{C}$. Furthermore, if $\nabla c(\myvar{q}) \neq 0$ in $\myset{D} \setminus \myset{C}$ and $\myvar{\mu} \equiv 0$, then also $(\myvar{q}(t), \myvar{v}(t))$ asymptotically approaches $\myset{C}$ in $\myset{D} \setminus \myset{C}$.
\end{cor}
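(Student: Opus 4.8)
The plan is to reuse the energy-based barrier function $h$ as the storage function and to exploit the skew-symmetry Property~\ref{prop:skew symmetric} exactly as in the derivation of \eqref{eq:hdot inequality}, but now carrying the disturbance term $\myvar{\mu}$ through the computation. First I would recompute $\dot{h}$ along the perturbed dynamics \eqref{eq:nonlinear affine dynamics input}. Substituting $\myvardot{v} = M^{-1}(-C\myvar{v} - F\myvar{v} - \myvar{g} + \myvar{u} + \myvar{\mu})$ into $\dot{h} = -\myvar{v}^T M \myvardot{v} - \tfrac{1}{2}\myvar{v}^T \dot{M}\myvar{v} + k_h \nabla c^T \myvar{v}$ and cancelling the $C$ and $\tfrac{1}{2}\dot{M}$ contributions via Property~\ref{prop:skew symmetric} gives $\dot{h} = \myvar{v}^T F \myvar{v} + \myvar{v}^T(k_h \nabla c + \myvar{g} - \myvar{u}) - \myvar{v}^T \myvar{\mu}$. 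With the prescribed control $\myvar{u} = \myvar{g} + k_h \nabla c$ the middle term vanishes identically, leaving the clean relation $\dot{h} = \myvar{v}^T F \myvar{v} - \myvar{v}^T \myvar{\mu}$.

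For the passivity claim I would take the candidate storage function $V = -h = \tfrac{1}{2}\myvar{v}^T M \myvar{v} - k_h c(\myvar{q})$, which is continuously differentiable and, since $h \leq 0$ on $\myset{D}\setminus\mysetint{C}$, satisfies $V \geq 0$ there; thus it is a legitimate storage function on this region. Differentiating and substituting the identity above yields $\dot{V} = -\myvar{v}^T F \myvar{v} + \myvar{v}^T \myvar{\mu}$. Because $F$ is positive definite, $-\myvar{v}^T F \myvar{v} \leq 0$, so $\dot{V} \leq \myvar{v}^T \myvar{\mu}$, which is precisely the dissipation inequality establishing passivity from the disturbance input $\myvar{\mu}$ to the collocated output $\myvar{v}$ on $\myset{D}\setminus\mysetint{C}$.

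For the convergence claim I would set $\myvar{\mu}\equiv 0$, so that $\dot{V} = -\myvar{v}^T F \myvar{v} \leq 0$ and $V$ becomes a Lyapunov-like function on $\myset{D}\setminus\myset{C}$; forward completeness and boundedness of the trajectory follow from Theorem~\ref{thm:zcbf} and Lemma~\ref{lem:compact C} as before, so LaSalle's invariance principle (Theorem~4.4 of \cite{Khalil2002}) applies. Since $F$ is positive definite, $\dot{V} = 0$ if and only if $\myvar{v} = 0$, and I would then argue, exactly as in Theorem~\ref{thm:robustness}, that no solution can remain identically in $\{\myvar{v} = 0\}\cap(\myset{D}\setminus\myset{C})$: $\myvar{v}\equiv 0$ forces $\myvardot{v} = 0$, which through \eqref{eq:nonlinear affine dynamics input} with $\myvar{\mu} = 0$ and $\myvar{u} = \myvar{g} + k_h \nabla c$ gives $0 = M^{-1} k_h \nabla c(\myvar{q})$, i.e.\ $\nabla c(\myvar{q}) = 0$, contradicting the hypothesis $\nabla c(\myvar{q})\neq 0$ on $\myset{D}\setminus\myset{C}$. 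Hence the largest invariant set in $\myset{D}$ lies in $\myset{C}$, and the trajectory asymptotically approaches $\myset{C}$.

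I expect the algebra to be routine given the $\dot{h}$ identity already established in the body; the one point requiring genuine care is the passivity bookkeeping, namely verifying that $V = -h$ is a valid non-negative storage function precisely on $\myset{D}\setminus\mysetint{C}$ and correctly identifying $\myvar{v}$ as the passive output, so that the sign conventions line up and the $F$-term is dissipative rather than generative. The convergence half is essentially a restatement of the LaSalle argument in Theorem~\ref{thm:robustness}, with the nonvanishing-gradient hypothesis $\nabla c \neq 0$ playing the role that $\myvar{u}\neq\myvar{g}$ played there.
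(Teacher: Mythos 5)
Your proposal is correct and follows essentially the same route as the paper: the passivity half is the paper's argument verbatim (storage function $S = -h$, non-negative on $\myset{D}\setminus\mysetint{C}$, the skew-symmetry cancellation of Property~\ref{prop:skew symmetric}, and substitution of $\myvar{u} = \myvar{g} + k_h\nabla c$ to obtain $\dot{S} \leq \myvar{v}^T\myvar{\mu}$ with output $\myvar{v}$), and the convergence half is the paper's reduction to Theorem~\ref{thm:robustness} via the observation that $\nabla c \neq 0$ forces $\myvar{u} \neq \myvar{g}$ on $\myset{D}\setminus\myset{C}$. One caution on your inline LaSalle re-derivation: the raw $V = -h$ does not literally satisfy the hypotheses of Khalil's Theorem 4.4, since $\myset{D}\setminus\myset{C}$ is not positively invariant and the corollary places no sign constraint on $\dot{h}$ inside $\mysetint{C}$ (where the control is unspecified), which is precisely why Theorem~\ref{thm:robustness} builds the $C^1$ cubic-patched Lyapunov-like function that vanishes on $\myset{C}$ --- so your deferral to ``exactly as in Theorem~\ref{thm:robustness}'' is doing genuine work and should not be replaced by a direct application of LaSalle to $-h$.
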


\begin{proof}
Consider the storage function $S(\myvar{q}, \myvar{v}) = -h(\myvar{q}, \myvar{v})$, and note that $S$ is non-negative in $\myset{D} \setminus \mysetint{C}$. Differentiation of $S$ yields: $\dot{S} \leq -\myvar{v}^T (k_h \nabla c(\myvar{q}) + \myvar{g} - \myvar{u} - \myvar{\mu})$.
Substitution of $\myvar{u} = \myvar{g}(\myvar{q}) + k_h \nabla c(\myvar{q})$ yields: $\dot{S} \leq \myvar{v}^T \myvar{\mu}$. We consider $\myvar{\mu}$ as the input and $\myvar{y} = \myvar{v}$ the output of the system \eqref{eq:nonlinear affine dynamics input}. It follows that the system is passive (see Definition 3.1.4 of \cite{Schaft2017}) in $\myset{D} \setminus \mysetint{C}$.

Furthermore, if $\nabla c(\myvar{q}) \neq 0$ then clearly $\myvar{u} \neq \myvar{g}(\myvar{q}) $ in $\myset{D} \setminus \myset{C}$. So if additionally $\myvar{\mu} \equiv 0$, we recover the result from Theorem \ref{thm:robustness}. Note that the condition that $\nabla c(\myvar{q}) \neq 0$ in $\myset{D}\setminus \myset{C}$ also ensures zero-state observability in $\myset{D} \setminus \myset{C}$.
\end{proof}
 In Corollary \ref{thm:passivity}, the condition $\nabla c \neq 0$ in $\myset{D} \setminus \myset{C}$ is required for robustness to hold, although \textit{passivity} will hold regardless if $\nabla c = 0$ or not. In many practical cases, e.g., ellipsoidal operating regions defined in joint space, $\nabla c = 0$ only occurs inside $\myset{Q}$ and so robustness is preserved. In non-convex sets, possibly including task-space constraints, this may not hold and may need to be checked a priori (see Section \ref{sec:HRI exp}). 

\subsection{Control Design}

Theorems \ref{thm:zcbf}, \ref{thm:robustness}, and Corollary \ref{thm:passivity} highlight key requirements that any control law should satisfy to ensure forward invariance, robustness, and passivity of the mechanical system, i.e., safety. We proceed by defining a safety-critical control law, $\myvar{u}$, that satisfies these requirements, while also admitting an existing nominal control law $\myvar{u}_{nom}: \mathbb{R}^n \times \mathbb{R}^n \times \mathbb{R}_{\geq 0} \to \mathbb{R}^n$ to be applied inside $\myset{C}$. We define this control as follows:
\begin{equation}\label{eq:safety controller}
    \myvar{u} =   \left( 1- \phi_{\varepsilon}(h) \right) \left(\myvar{g}(\myvar{q}) + k_h \nabla c(\myvar{q}) \right) + \phi_{\varepsilon}(h) \myvar{u}_{nom}(\myvar{q}, \myvar{v},t)
\end{equation}
where $\phi_{\varepsilon}: \mathbb{R} \to [0, 1]$ is defined by:
\begin{align} \label{eq:phi}
    \phi_{\varepsilon}(h) = \begin{cases}
    1, \text{ if } h > \varepsilon \\
    \kappa(h), \text{ if } h \in [0,\varepsilon] \\
    0, \text{ if } h < 0
    \end{cases}
\end{align}
and $\kappa: \mathbb{R} \to [0,1]$ is any locally Lipschitz continuous function that satisfies $\kappa(0) = 0$ and $\kappa(\varepsilon) = 1$, for some $\varepsilon \in \mathbb{R}_{>0}$. The design parameters $\kappa$ and $\varepsilon$ tune how ``aggressive" the system behaves to ensure safety. 

The role of $\phi_{\varepsilon}$ is to define how the control law \eqref{eq:safety controller} transitions between $\myvar{u}_{nom}$ and a safe control law that satisfies Theorems \ref{thm:zcbf}, \ref{thm:robustness} and Corollary \ref{thm:passivity}. This is less conservative than the original methods from \cite{Ames2019, Singletary2020} because we only enforce $\dot{h} \geq 0$ for $h \leq 0$. This is done via $\phi_\varepsilon(h)$. In \cite{Ames2019, Singletary2020}, the zeroing barrier function formulation requires $\dot{h} \geq - \alpha(h)$ for all $h$, which restricts what control can be implemented in $\myset{D}$. Furthermore, the proposed control is well-defined in $\myset{D}$, whereas the control \eqref{eq:zcbf kinematic qp} is not (see Section \ref{ssec:background}), and so cannot provide any guarantees of robustness or passivity. To see this, we step through each region of $\myset{D}$. First, in the region of $\myset{C}$ for which $h \geq \varepsilon$, $\phi_\varepsilon = 1$ so that $\myvar{u} = \myvar{u}_{nom}$. Thus the designer knows a priori when $\myvar{u}_{nom}$ will be implemented, and we define this set as:
\begin{equation}
    \myset{C}_\varepsilon:=\{ (\myvar{q}, \myvar{v}) \in \mathbb{R}^n \times \mathbb{R}^n: h(\myvar{q}, \myvar{v}) \geq \varepsilon\} \subset \myset{C}
\end{equation}
This design ensures that the original task, whether it be teleoperation with a human-in-the-loop, a learning-based task, or a stabilizing controller, will \textit{always} be implemented to ensure performance is unchanged in $\myset{C}_\varepsilon$. The term $\varepsilon$ is a tuning parameter that can be adjusted by the designer. In theory, $\varepsilon$ can be arbitrarily small, however this would result in a quick transition between $\myvar{u}_{nom}$ and $\myvar{g} + k_h \nabla c$, which may cause issues in implementation with respect to noise and low sampling rates.

Next, in the region between $\myset{C}$ and $\myset{C}_\varepsilon$, $\phi_\varepsilon$ transitions between $\myvar{u}_{nom}$ and the safe, passive control: $\myvar{u} = \myvar{g} + k_h \nabla c$. This transition is defined by $\kappa$, which is also freely chosen by the designer so long as it is locally Lipschitz and ensures continuity of \eqref{eq:safety controller}. Finally, at the boundary of $\myset{C}$, where $h = 0$, and for all states outside of $\myset{C}$, $\myvar{u} = \myvar{g} + k_h \nabla c$. From Theorem \ref{thm:zcbf}, we know that this choice of control is always in $\myset{K}_u$ so that forward invariance of $\myset{C}$ is guaranteed. From Corollary \ref{thm:passivity}, this control ensures passivity outside of $\myset{C}$, and if in addition $\nabla c \neq 0$ outside of $\myset{C}$, then the control also ensures robustness in the form of asymptotic stability to $\myset{C}$.

We state the formal guarantees of safety, passivity, and robustness of the proposed control in the following theorem:
\begin{thm}\label{thm:control guarantees}
Suppose $\myset{Q}$ defined by \eqref{eq:Q set} is compact for a continuously differentiable function $c: \mathbb{R}^n \to \mathbb{R}$. Let $\myset{D} \subset \mathbb{R}^n$ be any bounded, open set containing $\myset{C}$ such that $\myset{C} \subset \myset{D}$ for $\myset{C}$ defined by \eqref{eq:ZCBF C} and $h$ defined by \eqref{eq:ZCBF h}. If $\nabla c(\myvar{q})$ and $\myvar{u}_{nom}(\myvar{q}, \myvar{v}, t)$ are locally Lipschitz for all $(\myvar{q},\myvar{v}) \in \myset{D}$, $t \geq 0$,
the system \eqref{eq:nonlinear affine dynamics} under the control \eqref{eq:safety controller} ensures:
\begin{enumerate}
    \item $\myvar{u} = \myvar{u}_{nom}$ in $\myset{C}_\varepsilon$
    \item if $(\myvar{q}(0), \myvar{v}(0)) \in \myset{C}$, then $(\myvar{q}(t), \myvar{v}(t))$ remains in $\myset{C}$ for all $t\geq 0$
    \item the system is passive in $\myset{D}\setminus \mysetint{C}$
    \item if $(\myvar{q}(0), \myvar{v}(0)) \in \myset{D} \setminus \myset{C}$ and $\nabla c(\myvar{q}) \neq 0$ in $\myset{D} \setminus \myset{C}$, $(\myvar{q}(t), \myvar{v}(t))$ asymptotically approaches $\myset{C}$. 
\end{enumerate}
\end{thm}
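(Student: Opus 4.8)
The plan is to prove all four claims by verifying that the blended control law \eqref{eq:safety controller} satisfies the hypotheses of Theorems \ref{thm:zcbf}, \ref{thm:robustness} and Corollary \ref{thm:passivity}, and then invoking each result directly. The central observation is that the cutoff $\phi_\varepsilon$ collapses the control to the safe choice $\myvar{g}(\myvar{q}) + k_h \nabla c(\myvar{q})$ wherever $h \leq 0$: since $\phi_\varepsilon(0) = \kappa(0) = 0$ and $\phi_\varepsilon(h) = 0$ for $h < 0$, we have $\myvar{u} = \myvar{g}(\myvar{q}) + k_h \nabla c(\myvar{q})$ on all of $\myset{D} \setminus \mysetint{C}$, i.e.\ on both the boundary $\partial \myset{C}$ and the exterior region $\myset{D} \setminus \myset{C}$. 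Conversely, on $\myset{C}_\varepsilon = \{h \geq \varepsilon\}$ we have $\phi_\varepsilon = 1$ (using $\phi_\varepsilon(\varepsilon) = \kappa(\varepsilon) = 1$), so $\myvar{u} = \myvar{u}_{nom}$, which is exactly claim 1.

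Before invoking the earlier results I would establish the one nontrivial regularity fact: that $\myvar{u}$ is locally Lipschitz on the bounded set $\myset{D}$. I would argue that $\phi_\varepsilon$ is locally Lipschitz on $\mathbb{R}$ — its three pieces are each locally Lipschitz (two constants and $\kappa$), and they glue continuously at the breakpoints $h = 0$ and $h = \varepsilon$ precisely because $\kappa(0) = 0$ and $\kappa(\varepsilon) = 1$, so the piecewise function is locally Lipschitz everywhere. Since $h$ is continuously differentiable, $\phi_\varepsilon(h(\cdot))$ is locally Lipschitz; and since $\myvar{g}$, $\nabla c$, $\myvar{u}_{nom}$ are locally Lipschitz and $\myset{D}$ is bounded, the sums and products forming \eqref{eq:safety controller} remain locally Lipschitz on $\myset{D}$. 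This is the step I expect to be the main (if modest) obstacle, as it is the only place requiring care about the piecewise structure of $\phi_\varepsilon$.

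With local Lipschitzness in hand, the remaining claims follow by direct application. For claim 2, on $\partial \myset{C}$ we have $\myvar{u} = \myvar{g} + k_h \nabla c$, for which $\myvar{v}^T(k_h \nabla c + \myvar{g} - \myvar{u}) = 0 \geq 0$, so $\myvar{u} \in \myset{K}_u(\myvar{q}, \myvar{v})$; Theorem \ref{thm:zcbf} then yields forward invariance of $\myset{C}$. For claim 3, since $\myvar{u} = \myvar{g} + k_h \nabla c$ throughout $\myset{D} \setminus \mysetint{C}$, Corollary \ref{thm:passivity} (applied to the perturbed dynamics \eqref{eq:nonlinear affine dynamics input} with output $\myvar{v}$) gives passivity with respect to $\myvar{\mu}$. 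For claim 4, in $\myset{D} \setminus \myset{C}$ the same control lies in $\myset{K}_u$ and, because $k_h > 0$ and $\nabla c \neq 0$ by hypothesis, satisfies $\myvar{u} \neq \myvar{g}$; Theorem \ref{thm:robustness} (equivalently the second part of Corollary \ref{thm:passivity}) then gives that $(\myvar{q}(t), \myvar{v}(t))$ asymptotically approaches $\myset{C}$. I would close by recalling that compactness of $\myset{C}$ and the inclusion $\myset{C} \subset \myset{Q} \times \myset{V}$ from Lemma \ref{lem:compact C} are what tie forward invariance of $\myset{C}$ back to the original safety requirement $(\myvar{q}, \myvar{v}) \in \myset{Q} \times \myset{V}$.
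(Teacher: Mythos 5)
Your proof is correct and takes essentially the same route as the paper: the paper's own proof is a one-line appeal to Theorems \ref{thm:zcbf}, \ref{thm:robustness}, and Corollary \ref{thm:passivity}, and your argument simply makes explicit the verification that this appeal relies on (the form of $\myvar{u}$ in each region dictated by $\phi_\varepsilon$, its local Lipschitz continuity from the gluing conditions $\kappa(0)=0$, $\kappa(\varepsilon)=1$, and membership of $\myvar{g}+k_h\nabla c$ in $\myset{K}_u$ on $\partial\myset{C}$ and in $\myset{D}\setminus\myset{C}$).
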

\begin{proof}
Follows from Theorems \ref{thm:zcbf}, \ref{thm:robustness}, and Corollary \ref{thm:passivity}.
\end{proof}

Theorem \ref{thm:control guarantees} ensures the proposed control is well-posed in that it always exists to ensure safety. Furthermore, since the control is in closed-form, continuous, and defined on a bounded set, it is also bounded. This means the proposed controller can be designed to also satisfy input constraints. One way of doing this is by applying a saturation function on $\myvar{u}_{nom}$ and restricting the maximum allowable $\myvar{u}_{nom}$ with respect to the maximum values of $\myvar{g}$ and $\nabla c$ to satisfy input constraints. This will be a focus of future work.

\section{Human-Robot Interaction-based Experimental Results}\label{sec:HRI exp}

This section is devoted to the experimental verification of the proposed framework on the $6$-DOF Hebi robotic manipulator. 
We define the safety region as a region where the manipulator can safely operate without harming the humans around it, via a task-space constraint. Then we consider two scenarios.
Firstly, we design the nominal input $\myvar{u}_{nom}$ to track a time-varying trajectory that violates this region. We show that by implementing the proposed control with this nominal control via \eqref{eq:safety controller} that the systems stays within the operating region. Secondly, we implement the proposed control as the human interacts with the manipulator. We show that the manipulator is compliantly attempting to stay within the operating region despite disturbances from the human. The commands are sent to the robotic manipulator via a ROS node over Ethernet network at a frequency of 500 Hz. The control algorithms are implemented in C++ environment in a laptop computer system equipped with $15.3$ GB RAM and $12$-core i$7$-$875$0H CPU at $2.2$GHz.

The safe set is designed as follows. Let the position of the manipulator end-effector (here we consider the position of the motor) be given by the forward kinematics $\myvar{x} = \myvar{f}(\myvar{q})$, where $\myvar{f}: \mathbb{R}^n \to \mathbb{R}^3$. The linear Jacobian is $J(\myvar{q}) = \frac{\partial \myvar{f}}{\partial \myvar{q}}$, such that $\myvardot{x} = J(\myvar{q}) \myvar{v}$. 
The safe set is defined via the ellipsoidal constraint $c(\myvar{q}) = 1 - (\myvar{x}(\myvar{q}) - \myvar{x}_0)^\top P (\myvar{x}(\myvar{q}) - \myvar{x}_0)$, where $\myvar{x}_0 = [0.43,-0.12,0.12]^\top$ represents the center of the ellipsoid, and $P = \text{diag}\{[1.78,1.78, 4.95]\}$, which defines the lengths of the semi-axes as $0.75$m (in $x$, $y$) and $0.45$m (in $z$). 
The gradient $\nabla c(\myvar{q})$ in \eqref{eq:safety controller} takes the form $\nabla c(\myvar{q}) = - 2J(\myvar{q})^\top P (\myvar{x}(\myvar{q}) - \myvar{x}_0)$.
For the following demonstrations we use $\kappa(h) = -\frac{2}{\varepsilon^3}h^3 +\frac{3}{\varepsilon^2}h^2$, and $\varepsilon = 0.1$ in \eqref{eq:phi}.

Here we make a few key observations of the controller for this task-space operating region. First, the control law \eqref{eq:safety controller} is in closed-form and thus can always be defined. It is not a solution to an optimization problem which may have no solution or may fail to find a solution in real-time. Second, we see the influence of the Jacobian in the gradient $\nabla c(\myvar{q})$. Unlike many existing methods that require the inversion of $J(\myvar{q})$ \cite{Zacharaki2020}, the proposed control can be implemented despite any singularities encountered in the workspace. We see this in the extreme case when $\nabla c(\myvar{q}) = 0$. Despite this, Theorem \ref{thm:control guarantees} still ensures forward invariance of $\myset{C}$ and even in the presence of perturbations, the system is still passive in $\myset{D} \setminus \mysetint{C}$. The only, minor, impact of a singularity is that we lose asymptotic stability in regions outside the operating region where $J(\myvar{q})^\top P (\myvar{x}(\myvar{q}) - \myvar{x}_0) = 0$.

\begin{figure}
    \centering
    \includegraphics[width=.3\textwidth]{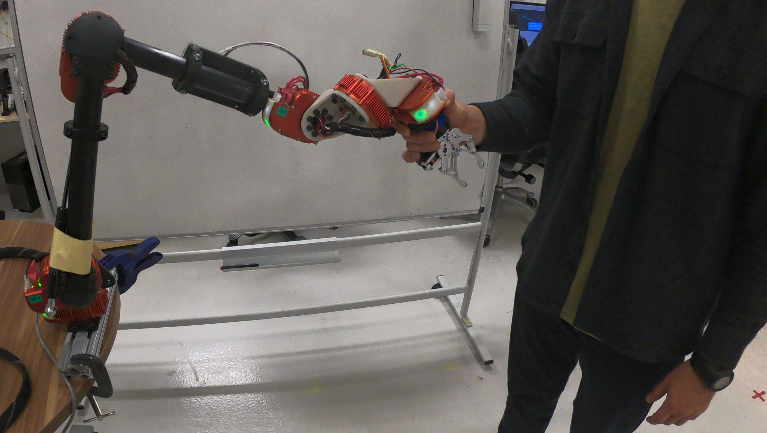}
    \caption{Experimental setup with a human interacting with  the Hebi robot.}
    \label{fig:exp setup}
\end{figure}

\subsection{Safe Implementation of a Nominal Control Law}
We first show how the proposed control behaves when a nominal input $\myvar{u}_{nom}$ attempts to violate the operating region. In particular, we design a standard inverse dynamics controller 
to track a given reference trajectory. 
The top plot of Fig. \ref{fig:c h nominal and barrier} shows that the trajectory associated with the nominal control violates the safe region defined by $c(\myvar{q})$ and $h(\myvar{q},\myvar{v})$. Next, we place a human in the vicinity of the robot, outside of the pre-defined operating region, and we implement the proposed control law \eqref{eq:safety controller} for $k_h = 0.25$. The results seen in the bottom plot of Fig. \ref{fig:c h nominal and barrier} show that both $h$ and $c$ remain non-negative throughout the entire trajectory. We do note that effects of (unmodeled) noise and sampling can be seen in the brief instance near $t = 45$ seconds at which point $h$ is negative. However the robustness of the control law handles the perturbation by pushing the system back into the set $\myset{C}$ as expected. Also, Fig. \ref{fig:inputs_barrier} shows that when $h \geq \varepsilon$ (regions between dashed lines), $\myvar{u} = \myvar{u}_{nom}$ as dictated by the control design. In the accompanying video file \cite{Video}, we see that these results correlate with the robot never leaving the safe operating region to ensure safety of the human.

\begin{figure}[t]
    \centering
    \includegraphics[width=.5\textwidth]{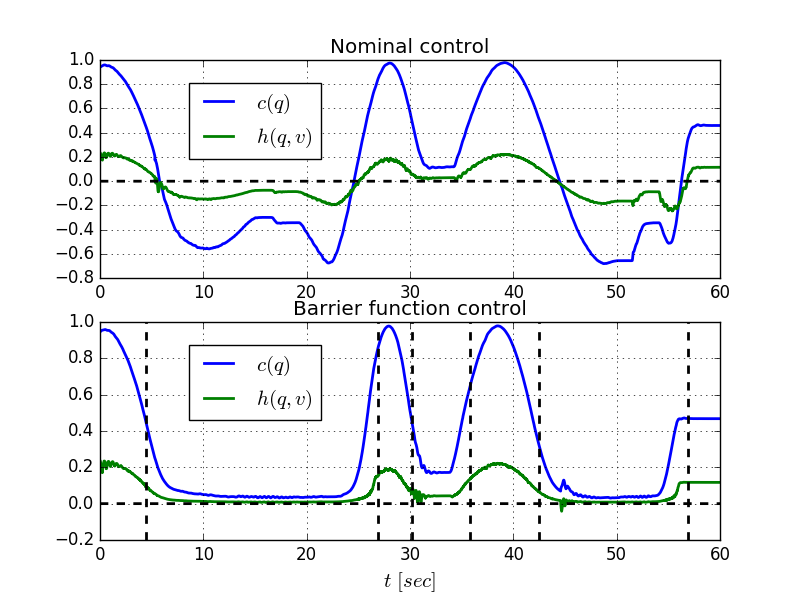}
    \caption{The evolution of $c(\myvar{q}), h(\myvar{q},\myvar{v})$ when $\myvar{u}=\myvar{u}_{nom}$ (top) and the barrier control scheme \eqref{eq:safety controller} (bottom), where vertical dashed lines signify the time instants where ${h}(\myvar{q},\myvar{v}) = \varepsilon=0.1$.}
    \label{fig:c h nominal and barrier}
\end{figure}

\begin{figure}
    \centering
    \includegraphics[width=.5\textwidth]{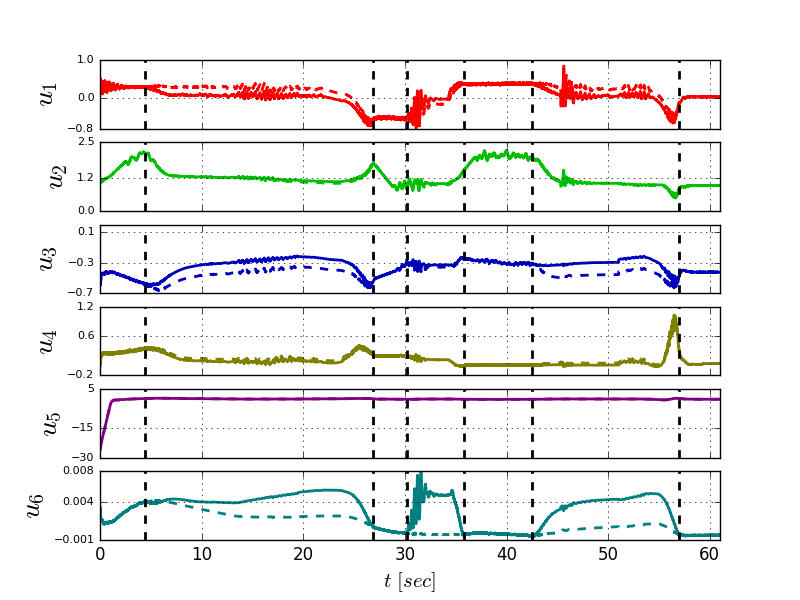}
    \caption{The evolution of the control inputs for the barrier control scheme \eqref{eq:safety controller}, where vertical dashed lines signify the time instants where ${h}(\myvar{q},\myvar{v}) = \varepsilon=0.1$.}
    \label{fig:inputs_barrier}
\end{figure}

\subsection{Passivity in Physical Human-Robot Contact}
We now show how the proposed scheme behaves when the robotic manipulator is in contact with a human. Here we set the nominal control as a gravity compensator, $\myvar{u}_{nom} = g$. The human attempts to violate the set $\mathcal{C}$ both by pushing the robot outside the safe region $\mathcal{Q}$ as well as moving the end-effector at excessive speeds. The results of applying the proposed control law \eqref{eq:safety controller} are show in Fig. \ref{fig:human interaction} and are depicted in the accompanying video file \cite{Video}. Within the first 70 seconds, the human pushes the robot outside of the operating region. This is seen as both $c$ and $h$ become negative at which point the safe control law (i.e. $\myvar{u} = \myvar{g} + k_h \nabla c$ ) passively pushes the system back into the operating region. At about $t =$ 55 seconds, the human pushes the robot into and beyond a singular configuration. Despite the proximity to singularity, the control is still well-defined (see the bottom plot of Fig. \ref{fig:human interaction}), and the robot is still able to return to the safe set. After 70 seconds, the human attempts to push the system to large velocities, while remaining inside the operating region. This is seen as $c$ remains positive while $h$ becomes negative. Here the control acts to resist the excessive speed and dampen out the humans actions. This demonstration shows that the proposed method ``behaves well" in the sense that it is well-defined, passive, and robust to (human) perturbations.

\begin{figure}
    \centering
    \includegraphics[width=.5\textwidth]{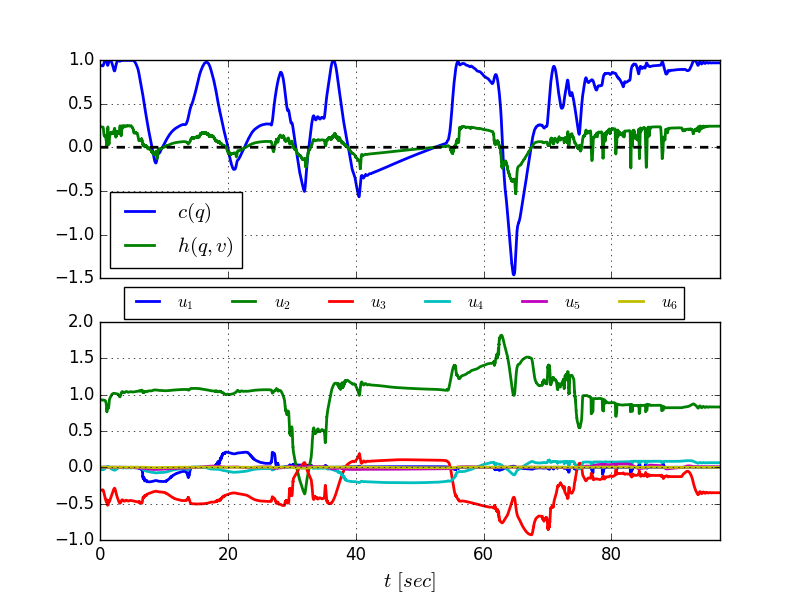}
    \caption{The evolution of ${c}(\myvar{q})$, ${h}(\myvar{q},\myvar{v})$ (top) along with the inputs $\myvar{u}$ (bottom) for human-robot contact experiment.}
    \label{fig:human interaction}
\end{figure}

\section{Conclusion}

In this paper, we developed a safe, passive, and robust control law for mechanical systems. The proposed control is used to ensure forward invariance of a specified operating region and ensures velocity requirements are always respected in the operating region. Furthermore, the control law allows any existing, nominal control law to be implemented in the operating region as long as the safety requirements are adhered to. Finally, the control is well-defined in the robot workspace and ensures robustness and passivity of the system outside the operating region. The results presented include formal guarantees of safety and a demonstration of the proposed method for a task-space based operating region on a 6-DOF robot. Future work will consider extensions to multi-robot and human interactions.

\bibliographystyle{IEEEtran}
\bibliography{IEEEabrv,CBF_mech_conf}

\end{document}